\documentclass{article}

\usepackage[preprint]{neurips_2025}

\usepackage[utf8]{inputenc}
\usepackage[T1]{fontenc}
\usepackage{url}
\urlstyle{same}
\usepackage{booktabs}
\usepackage{amsmath,amssymb,amsthm} 
\usepackage{graphicx}
\usepackage{wrapfig}
\usepackage{enumitem}
\usepackage{algorithm}
\usepackage{algorithmic}
\usepackage{microtype}
\usepackage{multirow}
\usepackage[dvipsnames,table,xcdraw]{xcolor}
\usepackage{subcaption}
\usepackage{adjustbox}
\usepackage{natbib}  

\usepackage[colorlinks=true,linkcolor=blue,citecolor=blue,urlcolor=blue]{hyperref}
\hypersetup{
  pdftitle={AI Progress Should Be Measured by Capability-Per-Resource, Not Scale Alone},
  pdfauthor={David McCoy, Yulun Wu, Zachary Butzin-Dozier},
  pdfkeywords={large language models, resource efficiency, parameter-efficient fine-tuning,
               gradient influence, data selection}
}

\newcommand\davidtext[1]{}

\usepackage{amsthm}  

\theoremstyle{plain} 
\newtheorem{theorem}{Theorem}[section] 

\newtheorem{proposition}[theorem]{Proposition}

\theoremstyle{definition} 

\theoremstyle{remark} 

\newcommand{\propref}[1]{Proposition~\ref{#1}}


\title{AI Progress Should Be Measured by Capability-Per-Resource, Not Scale Alone: A Framework for Gradient-Guided Resource Allocation in LLMs}

\author{
  David McCoy\thanks{Correspondence to: \texttt{david\_mccoy@berkeley.edu}} \\
  Division of Biostatistics\\
  University of California, Berkeley \\
  \And
  Yulun Wu \\
  Division of Biostatistics\\
  University of     California, Berkeley \\ 
  Capital One
  \And
  Zachary Butzin-Dozier \\
  Division of Biostatistics\\
  University of California, Berkeley 
}

\begin{document}

\maketitle

\begin{abstract}
This position paper challenges the "scaling fundamentalism" dominating AI research, where unbounded growth in model size and computation has led to unsustainable environmental impacts and widening resource inequality. We argue that LLM development should be fundamentally reoriented toward capability-per-resource rather than capability alone. We present a theoretical framework demonstrating that resource-allocation decisions guided by gradient influence patterns can dramatically improve efficiency throughout the AI lifecycle. Our analysis shows that in transformer-based models, where a small fraction of parameters exert outsized influence (following heavy-tailed distributions), three critical insights emerge: (1) updating only high-influence parameters strictly outperforms full-parameter tuning on a performance-per-resource basis; (2) simple gradient norms provide computationally efficient proxies for identifying these high-influence components; and (3) coordinated parameter and data selection yields multiplicative efficiency gains, potentially reducing resource requirements by orders of magnitude. Building on these theoretical foundations, we propose a two-stage paradigm—marginal-return pretraining for foundation developers and influence-guided adaptation for downstream users—bridged by gradient blueprints, metadata describing which parameters matter most for various tasks. This capability-per-resource perspective transforms what were once considered pragmatic hardware workarounds into theoretically optimal strategies, democratizing access to cutting-edge AI capabilities while significantly reducing environmental impact. By embedding resource consciousness into how we develop, adapt, and evaluate models, we can reshape AI progress toward a more sustainable and equitable future.
\end{abstract}

\section{Introduction}

Training a single large language model can demand extraordinary resources: GPT-3 produced 552 tons of CO$_2$ equivalent \citep{patterson2021carbon}, while LLaMA 65B demonstrated striking diminishing returns with its final 0.2T tokens yielding less than 0.01 improvement in validation loss despite consuming approximately 15\% of total training compute \citep{touvron2023llama}. This ``scaling fundamentalism'' -- the pursuit of ever-larger models without explicit consideration of resource efficiency -- has both environmental consequences \citep{strubell-etal-2019-energy, bender2021dangers} and exacerbates the divide between well-resourced industry labs and the broader research community \citep{bommasani2021opportunities, birhane2022power}.

\textbf{LLM development should be guided by capability-per-resource rather than capability alone, with resource-allocation decisions driven by gradient influence patterns at both parameter and data levels.}

Today's AI landscape consists of two distinct tiers: foundation model developers with massive computational resources, and downstream adapters working under strict resource constraints \citep{bommasani2021opportunities}. While scaling laws \citep{kaplan2020scaling, hoffmann2022training} guide initial architecture selection, they don't address two critical efficiency questions: (1) when foundation developers should halt training as returns diminish, and (2) which specific parameters downstream users should prioritize when adapting these models with limited resources. These decisions currently lack formal resource-conscious frameworks despite growing calls for systematic efficiency reporting \citep{henderson2020towards, liang2023holisticevaluationlanguagemodels}.

Our primary contribution is a formal theory of \textbf{gradient-guided resource allocation} showing that focusing on high-influence components can dramatically improve efficiency in LLM development. This theory operates along two critical dimensions: \textbf{1. Parameter-wise efficiency}: We prove that under realistic power-law gradient distributions, updating only the most influential parameters can strictly outperform full-parameter tuning on a performance-per-resource basis. This provides theoretical foundations for the empirical success of methods like LoRA \citep{hu2021lora} and QLoRA \citep{dettmers2023qlora}. \textbf{2. Data-wise efficiency}: The same theoretical framework extends to training data, where we demonstrate that not all examples contribute equally to learning \citep{katharopoulos2018not, tay2019lightweight}. By identifying high-gradient training examples that maximally impact relevant parameters, we can achieve further multiplicative efficiency gains.

Building on this unified framework, we introduce \textbf{gradient blueprints}—metadata released alongside model weights that reveal which parameters and data patterns matter most. For example, one blueprint might identify that for English-language understanding tasks, 13\% of mid-layer attention parameters carry 85\% of the total gradient influence, enabling targeted fine-tuning on just those components. In a multilingual translation scenario, the blueprint might show that cross-attention blocks in later layers contribute disproportionately to performance, whereas for math-heavy reasoning tasks it highlights feed-forward submodules near the final layers. By selectively tuning only these high-influence subsets, downstream practitioners can adapt the model at a fraction of the cost without forfeiting accuracy. Our formal analysis demonstrates three key results: (1) gradient distributions in transformer models follow power-law patterns \citep{michel2019sixteen, li2025enhancinglargelanguagemodel} where a small fraction of parameters exert outsized influence; (2) simple gradient norms effectively approximate more complex influence measures without requiring second-order computations; and (3) memory savings from selective parameter updates enable practical efficiency gains by reducing optimizer state storage requirements, which are proportional to the number of trainable parameters. This holistic resource-efficiency approach enables the combination of selective parameter updates and targeted data selection to achieve significantly higher performance-per-resource than conventional approaches, while democratizing access to advanced AI capabilities across a broader range of institutions and researchers.

In \S\ref{sec:related}, we discuss related work in parameter-efficient fine-tuning and resource tracking. In \S\ref{sec:method}, we introduce our resource-conscious framework, with emphasis on gradient blueprints for downstream adapters. \S\ref{sec:theory} presents theoretical foundations showing why partial updates can outperform full-parameter tuning. \S\ref{sec:implementation} outlines practical implementations of gradient-centric model releases. Finally, \S\ref{sec:discussion} discusses broader implications and future directions. \textit{Our position challenges the dominant paradigm that ever-larger models and more computation will inevitably lead to better AI}. Proponents of scaling argue that: (1) increasing model size has reliably produced qualitative leaps in capability \citep{kaplan2020scaling, wei2022emergent}; (2) resource constraints are temporary and will be solved through hardware innovation; and (3) focusing on efficiency might slow progress toward important breakthroughs. These views have merit -- scaling has indeed driven remarkable advances. However, they overlook several key factors: (1) the environmental impact of unbounded scaling is substantial and growing \citep{strubell-etal-2019-energy, luccioni2023estimating}; (2) hardware efficiency gains have not kept pace with model size growth; and (3) the concentration of AI capabilities among well-resourced labs threatens broader scientific progress \citep{ahmed2020democratization, mohamed2020decolonial}. Most importantly, even scaling advocates recognize that simply training larger models isn't optimal -- \citep{hoffmann2022training}'s (2022) Chinchilla scaling laws demonstrate that resource allocation between model size and training compute matters. Our position does not oppose scaling entirely, but rather advocates for embedding resource-consciousness within scaling decisions. By making capability-per-resource the north star metric, we can continue AI advancement while ensuring sustainability, democratization, and optimal resource allocation.

\section{Related Work: The Efficiency Divide in AI Development}
\label{sec:related}

\noindent
\textbf{Scaling Laws and Resource Constraints.} 
AI progress has been dominated by what we term \emph{scaling fundamentalism}—the belief that increasing model size and training data will inevitably yield superior capabilities \citep{kaplan2020scaling, hoffmann2022training}. While the ``Chinchilla'' scaling laws \citep{hoffmann2022training} provide valuable insights into optimal compute allocation, they fundamentally operate \emph{within} fixed computational budgets, treating environmental costs as external constraints rather than intrinsic optimization targets. This has produced a research landscape where performance improvements eclipse environmental considerations, despite growing evidence of AI's substantial carbon footprint \citep{strubell-etal-2019-energy, patterson2021carbon}.

The resulting computational demands have created what Bender et al.\ \citep{bender2021dangers} call a ``danger of stochastic parrots''—increasingly capable models that consume disproportionate resources while amplifying biases and excluding marginalized voices. This pattern reinforces a stark divide in the AI ecosystem: a handful of well-resourced institutions develop massive foundation models, while the broader research community must operate under strict resource constraints \citep{bommasani2021opportunities}. This concentration of computational power has produced what some researchers describe as ``computational oligarchs'' \citep{birhane2022power}, driving a monoculture where resource-rich organizations dictate global AI priorities \citep{mohamed2020decolonial, ahmed2020democratization}.

\noindent
\textbf{Parameter and Data Efficiency Approaches.} 
The resource constraints faced by most researchers have spurred development of various efficiency-enhancing techniques. Parameter-efficient fine-tuning methods like LoRA \citep{hu2021lora}, QLoRA \citep{dettmers2023qlora}, and adapters \citep{houlsby2019parameter} reduce memory requirements by updating only a small subset of parameters \cite{frantar2023sparsegpt, frankle2018lottery,han2015deep}. Post-training approaches such as pruning \citep{frankle2018lottery, han2015deep} and attention head elimination \citep{michel2019sixteen} remove unnecessary components after full training \cite{sanh2020movement}. These methods effectively reduce inference and adaptation costs but are typically presented as pragmatic workarounds rather than principled optimization strategies. Parallel efforts have addressed data efficiency through importance sampling \citep{katharopoulos2018not}, curriculum learning \citep{bengio2009curriculum}, and coreset selection \citep{sener2018active}. Recent empirical investigations have revealed heavy-tailed gradient distributions in transformers \citep{michel2019sixteen, li2025enhancinglargelanguagemodel}, suggesting that both parameters and data points exhibit highly skewed influence distributions. However, these approaches have developed largely independently, without a unified theoretical framework that spans both parameter and data dimensions.

\noindent
\textbf{Evaluation and Reporting Frameworks.} 
Recent initiatives have begun to address resource efficiency in AI evaluation. The Holistic Evaluation of Language Models (HELM) framework \citep{liang2023holisticevaluationlanguagemodels} incorporates efficiency metrics such as training cost per kWh or CO$_2$ emitted, but primarily serves as an evaluation tool rather than offering mechanisms to systematically improve efficiency. Similarly, environmental impact statements \citep{luccioni2023estimating} and carbon accounting systems \citep{henderson2020towards, lacoste2019quantifying} promote transparency but do not provide concrete guidance for resource optimization during model development and adaptation. Patterson et al.\ \citep{patterson2021carbon} demonstrated that architectural choices can reduce carbon footprint by 100--1000$\times$, but stopped short of providing a formal framework for identifying which specific parameters matter most. This reveals a critical gap in the literature: despite growing awareness of resource constraints, there exists no principled framework that unifies parameter and data efficiency, provides theoretical guarantees for selective updates, and offers practical mechanisms for efficient knowledge transfer between foundation model developers and downstream adapters.

\section{A Principled Framework for Resource-Conscious LLM Development}
\label{sec:method}

\noindent
\textbf{From Capability to Capability‑Per‑Resource.}
We propose measuring progress by how much performance $\Psi$ improves \emph{per unit of resource} $\Gamma$ spent. This is complementary to scaling laws: scaling helps choose the size–data regime in advance, while capability‑per‑resource guides decisions \emph{during} training about when to stop and what to update. Concretely, we stop when the ratio $\Delta\Psi/\Delta\Gamma$ \emph{averaged over a short moving window} falls below a threshold $\eta$ and stays below it for $P$ successive checks (a simple “patience” rule that handles non‑monotonic learning curves). In practice, report: (i) the task score $\Psi$; (ii) the total resource $\Gamma$ (e.g., GPU‑hours, energy in kWh, or VRAM‑GB$\times$hours) and, when possible, its carbon estimate; and (iii) the final $(\Psi,\Gamma)$ pair and the last‑window ratio.

\noindent
\textbf{Two-Stage Resource-Conscious Approach.}
Our framework addresses two distinct communities in the “bifurcated AI ecosystem”:
\emph{(1)~Foundation-model developers}, who have abundant compute and train general-purpose models, 
and \emph{(2)~Model adapters (or users)}, who specialize these checkpoints for narrower domains under stricter budgets through fine-tuning.

\textit{Stage 1: Marginal‑return pretraining (foundation labs).}
Within a chosen scaling regime, track $\Delta\Psi/\Delta\Gamma$ over short moving windows and stop when the averaged ratio stays below $\eta$ for $P$ successive checks. \emph{Scope:} this applies to foundation labs already running large pretraining; downstream users are not expected to pretrain. $\Delta\Gamma$ can grow late in training due to optimizer/state footprint and I/O (e.g., checkpointing) \citep{you2020large, patterson2021carbon}. Making these costs explicit can save double‑digit percent compute at negligible loss \citep{touvron2023llama, hoffmann2022training}. \textit{Stage 2: Influence-Guided Adaptation (Model Adapters).}
Once a pretraining checkpoint is released, downstream practitioners need only fine-tune a \emph{fraction} of parameters 
to achieve strong performance—provided they know \emph{which} parameters matter most. 
To guide this, we propose foundation developers publish \emph{gradient blueprints}: metadata summarizing 
submodule-level gradient norms on representative tasks. 
Adapters then update only those \emph{high-influence parameters} (e.g.\ 10--20\% of total), 
thereby reducing memory overhead and enabling more epochs or larger batch sizes within a fixed budget.

\begin{figure}[t]
    \centering
    \includegraphics[scale=0.3]{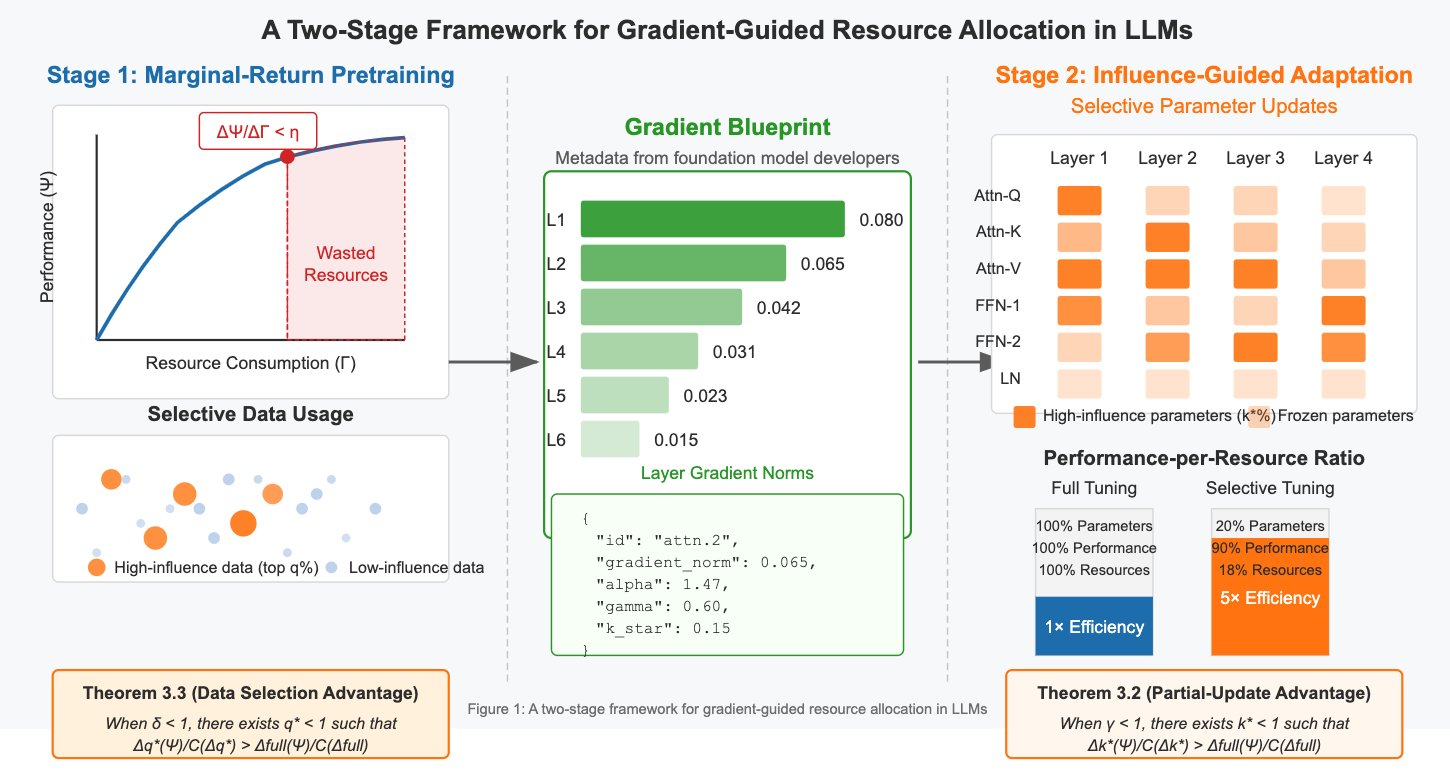}
    \vspace{-0.5em}
    \caption{\textbf{Resource-Conscious LLM Lifecycle.}
    (\textit{Left}) Stage~1 halts pretraining once $\Delta \Psi / \Delta \Gamma$ dips below $\eta$. 
    (\textit{Right}) Stage~2 fine-tunes only high-influence submodules (selected via blueprint data), 
    improving performance-per-resource.}
    \label{fig:two_stage_transition}
\end{figure}

\noindent
\textbf{Cross-Influence and Multiplicative Gains.}
In addition to \emph{parameter} selection, we propose selective \emph{data} usage 
to further amplify efficiency. 
We introduce a \emph{cross-influence tensor} that quantifies the interaction 
between specific parameters and specific training examples. 
Selecting only high-influence parameter-data pairs yields \emph{multiplicative} savings: 
for instance, if parameter pruning retains 80\% performance using 20\% of parameters, 
and data filtering retains 90\% performance using 30\% of data, 
their combination can retain 72\% of performance at only 6\% of total resource cost.

\noindent
\textbf{Implementation and Real-World Considerations.}
\emph{(1)~Automated Blueprint Tools:} 
We envision open-source packages to log submodule-level gradients during validation, 
output them in standardized JSON format, and guide partial updates downstream.  
\emph{(2)~Gradient Prediction Networks:} 
To avoid processing all training data just to find “high-influence examples,” 
lightweight networks can estimate gradient norms for unprocessed samples.  
\emph{(3)~Public Releases of Blueprints:} 
Foundation labs can publish gradient norms or approximate “influence maps” 
for each submodule, letting adapters quickly identify top-$k\%$ parameters.  
\emph{(4)~Protocol for Adaptation:} 
When facing a new domain (e.g., medical text), users can blend blueprint statistics 
with local gradient checks on a small in-domain sample, refining the set of parameters 
they update.  

\noindent
\textbf{Why Resource Awareness Matters.}
By embedding explicit resource metrics into both pretraining and adaptation, 
we transform open-ended scaling into a principled search for the highest-value use of compute. 
Rather than treating efficiency as an afterthought, 
our framework \emph{prioritizes} it from the outset, 
enabling smaller labs to access state-of-the-art performance without massive budgets, 
and reducing environmental impact by potentially orders of magnitude. 
In short, we reconceive “bigger is better” as “better per unit resource,” 
democratizing advanced AI capabilities through a fundamentally more sustainable paradigm.

\section{Theoretical Foundations of Resource-Conscious Training}
\label{sec:theory}

We now present the mathematical underpinnings of our approach. 
In \S\ref{subsec:partial_updates}, we show that under realistic heavy-tailed gradient distributions, 
\emph{updating only a fraction of parameters} can be more cost-effective than tuning them all. 
In \S\ref{subsec:grad_approx}, we prove that \emph{simple first-order gradient norms} can approximate 
more expensive second-order influence calculations. \S\ref{subsec:data_selection} extends these ideas 
to \emph{selective data usage}, and \S\ref{subsec:cross_influence} synthesizes parameter- and data-level 
selection to achieve \emph{multiplicative} efficiency gains. Proofs or extended details appear in 
Appendix~\ref{app:theory}, with sketches below.

\subsection{Partial Parameter Updates}
\label{subsec:partial_updates}

\paragraph{Why Focus on Partial Updates?} 
For massive LLMs, fine-tuning \emph{all} parameters can be prohibitively expensive 
(\emph{e.g.}, storing optimizer states for billions of weights). 
It is empirically known that \emph{parameter-efficient} methods (like LoRA or QLoRA) 
often match full-parameter performance at a fraction of the cost 
(\cite{hu2021lora,dettmers2023qlora}). 
But \emph{why} does partial tuning suffice? 
In this subsection, we ground that intuition in a theoretical claim: 
\emph{if gradients are heavily skewed, then updating only the top few “high-influence” parameters 
can strictly outperform full-parameter tuning on a performance-per-resource basis}. 
We call this phenomenon \emph{partial-update advantage}.

\paragraph{Gradient Concentration and Heavy Tails.}
Many large language models appear to exhibit \emph{heavy-tailed} gradients, 
where a small subset of parameters (or submodules) captures a disproportionately large fraction 
of the total gradient norm (\cite{michel2019sixteen, li2025enhancinglargelanguagemodel}). 
Concretely, let \(\{\|\nabla_{\theta_{(1)}}\|, \dots, \|\nabla_{\theta_{(N)}}\|\}\) 
be the descending magnitudes of parameter gradients at some reference point \(\theta_{\mathrm{base}}\). 
A \emph{power-law} assumption states
\begin{equation}
   \label{eq:plaw_detailed}
    \|\nabla_{\theta_{(r)}}\| \;\approx\; C\, r^{-\alpha},
    \quad
    \text{for constants } C>0, \alpha>1,
\end{equation}
meaning the $r$-th largest gradient decays polynomially in $r$. Under typical exponents $\alpha\in(1,2)$, 
the top $k\%$ of parameters can account for an outsized share of total gradient mass.

\paragraph{A Toy Illustration.}
As a toy example, suppose $N=10{,}000$ parameters with a power-law exponent $\alpha=1.5$. 
A quick integral approximation (Appendix~\ref{app:heavy_tail_sums}) indicates 
the \emph{top-10\%} of parameters might contribute nearly 50\% of the total gradient norm. 
If you only update these top-10\%, you might capture half the possible improvement 
while storing only 10\% of the optimizer states—a potential 5$\times$ memory/performance ratio gain.

\paragraph{Performance vs.\ Resource.}
To quantify the tradeoff, let $\Delta_k(\Psi)$ be the improvement in performance $\Psi$ 
(e.g.\ accuracy or perplexity) achieved by fine-tuning a chosen $kN$ subset of parameters. 
Under local, short-step approximations around $\theta_{\mathrm{base}}$, 
we can \emph{proportionally} link $\Delta_k(\Psi)$ to the fraction of gradient norm retained, 
giving the form 
\[
   \Delta_k(\Psi)\approx k^\gamma\,\Delta_{\text{full}}(\Psi),
\]
where $0<\gamma<1$ captures heavy-tailed concentration \citep{michel2019sixteen}.

\noindent\textbf{Resource model (memory/state‑aware).}
Modern autograd computes forward/backward for all active layers, so partial updates do not linearly shrink backward FLOPs. We therefore let $\mathcal{C}(\Delta_k)=\alpha N+\beta(kN)$ capture \emph{memory/optimizer‑state and update} overheads (e.g., Adam moments, parameter buffers) that scale with the number of \emph{trainable} parameters, while $\alpha N$ subsumes fixed per‑step costs (forward activations, non‑trainable states). The proposition below only requires $\beta>0$—training more parameters increases resource usage; it does not assume FLOP savings.

\begin{proposition}[Partial Updates Outperform Full Updates (Sketch)]
\label{prop:partial_eff_detail}
If the partial performance gain $\Delta_k(\Psi)$ satisfies a heavy-tailed form 
$\Delta_k(\Psi)\approx k^\gamma\,\Delta_{\mathrm{full}}(\Psi)$ with $0<\gamma<1$, 
and resource cost includes a per-parameter overhead $\beta>0$ as above, 
then there exists a fraction $k^*\in(0,1)$ such that
\[
  \frac{\Delta_{k^*}(\Psi)}{\mathcal{C}(\Delta_{k^*})}
  \;>\;
  \frac{\Delta_{\mathrm{full}}(\Psi)}{\mathcal{C}(\Delta_{\mathrm{full}})}.
\]
Hence, updating only $k^*$ of parameters yields a strictly higher performance-per-resource ratio 
than fine-tuning all $N$ parameters.
\end{proposition}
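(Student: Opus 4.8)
The plan is to reduce the claim to a one-variable calculus problem in the update fraction $k\in(0,1]$. Writing $D:=\Delta_{\mathrm{full}}(\Psi)>0$ for the fixed full-update gain and substituting the two stated forms, the performance-per-resource ratio becomes
\[
  R(k)\;:=\;\frac{\Delta_k(\Psi)}{\mathcal{C}(\Delta_k)}\;=\;\frac{D\,k^{\gamma}}{\alpha N+\beta N\,k},
\]
so that $R(1)=D/\big((\alpha+\beta)N\big)$ is exactly the full-parameter baseline. The proposition is then precisely the statement that $R$ takes a value strictly above $R(1)$ somewhere in $(0,1)$, so it suffices to study the shape of this single function.

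First I would differentiate. A routine quotient-rule computation gives
\[
  R'(k)\;=\;\frac{D\,k^{\gamma-1}}{(\alpha N+\beta N k)^2}\,\Big[\gamma\alpha N-(1-\gamma)\beta N\,k\Big],
\]
whose sign is governed entirely by the bracketed linear term, since the prefactor is positive for $k>0$. Hence $R$ is unimodal — increasing for $k<k^{*}$ and decreasing afterward — with the unique stationary point
\[
  k^{*}\;=\;\frac{\gamma}{1-\gamma}\,\frac{\alpha}{\beta}.
\]
Because $0<\gamma<1$ and $\alpha,\beta>0$, this $k^{*}$ is positive and is a genuine maximizer. The cleanest route to the conclusion is then a local test at the endpoint: whenever $R'(1)<0$ the ratio is still falling as we reach full tuning, so nudging $k$ just below $1$ already improves on $R(1)$, and the interior maximizer $k^{*}\in(0,1)$ witnesses the strict inequality.

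The step I expect to be the genuine obstacle is verifying $k^{*}<1$ (equivalently $R'(1)<0$), since this is where the hypothesis must do real work. Unwinding $\gamma\alpha<(1-\gamma)\beta$ shows it is equivalent to
\[
  \gamma\;<\;\frac{\beta}{\alpha+\beta},
\]
i.e.\ the partial-update advantage holds precisely when gradient concentration is strong enough (small $\gamma$) relative to the fixed overhead, or equivalently when the per-parameter cost $\beta$ is large enough compared with $\alpha$. Thus the assertion that one needs \emph{only} $\beta>0$ is slightly too optimistic: for $\gamma\ge\beta/(\alpha+\beta)$ the ratio $R$ is monotone on $(0,1]$ and full tuning is in fact optimal. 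I would therefore state the proposition with the explicit side condition $\gamma<\beta/(\alpha+\beta)$ — automatically satisfied in the heavy-tailed regime $\gamma\to 0$ that motivates the paper — under which the displayed strict inequality follows at once from unimodality together with $R(k^{*})>R(1)$.
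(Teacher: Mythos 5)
Your proposal is correct and follows essentially the same route as the paper's own proof: the same reduction to the one-variable ratio $R(k)\propto k^{\gamma}/(\alpha+\beta k)$, the same critical point $k^{*}=\gamma\alpha/\bigl(\beta(1-\gamma)\bigr)$, and the same validity condition $\gamma<\beta/(\alpha+\beta)$. Where you differ is in how the conclusion is closed, and your version is strictly tighter on two counts. First, you finish by unimodality: once $k^{*}<1$, the fact that $R$ is strictly decreasing on $(k^{*},1]$ gives $R(k^{*})>R(1)$ with no further computation. The paper instead computes $G(k^{*})$ in closed form, concedes that "the comparison of $G(k^{*})$ and $G(1)$ depends on specific values of $\alpha$, $\beta$, and $\gamma$," and then verifies the inequality only for the single numerical instance $\alpha=1$, $\beta=4$, $\gamma=0.6$ --- a spot check that your shape argument renders unnecessary. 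Second, and more importantly, you are right that the proposition as stated is too strong: with only $\beta>0$ and $0<\gamma<1$ it is false, since for $\gamma\ge\beta/(\alpha+\beta)$ (e.g.\ $\alpha=\beta=1$, $\gamma=0.9$) the ratio $R$ is monotone increasing on $(0,1]$ and full tuning is strictly optimal. The paper's appendix proof does derive this same condition but treats it as a mild practical remark ("satisfied when $\beta$ is sufficiently large relative to $\alpha$") rather than as a hypothesis the statement needs; your recommendation to add $\gamma<\beta/(\alpha+\beta)$ as an explicit side condition is the correct fix, and the proposition should be amended accordingly.
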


\paragraph{Why This Matters.}
\emph{Hardware Necessity vs.\ Theoretical Principle.} 
Methods like LoRA and QLoRA are often viewed as practical “hacks” to reduce memory usage 
so that smaller labs can still fine-tune large models. 
\emph{Proposition~\ref{prop:partial_eff_detail}} reframes them as \emph{optimal resource allocations} 
when gradients are strongly non-uniform—i.e.\ partial updates can be \emph{mathematically superior} 
to full-model tuning if your goal is \emph{maximizing} $\Delta \Psi / \Delta \Gamma$.  
In \S\ref{subsec:grad_approx}, we discuss \emph{which parameters} to pick, 
and in \S\ref{subsec:cross_influence}, we combine partial-parameter selection with data selection.

\subsection{Approximate Influence via Simple Gradients}
\label{subsec:grad_approx}

\paragraph{Why Approximation?}
Even if partial updates are beneficial, we still face the question: 
\emph{which} $kN$ parameters do we choose? 
Naively, one might attempt expensive second-order influence calculations 
(e.g.\ computing Hessians or Fisher matrices) to find the “most impactful” parameters. 
Here, we show that \emph{simple gradient norms} can approximate such second-order metrics 
without incurring huge computational overhead.

\paragraph{First-Order Proxies vs.\ Second-Order Influence.}
In classical semiparametric theory \citep{bickel1993efficient, vdl_rose11}, 
the \emph{efficient influence function} $D^*$ encodes how each parameter $\theta_i$ 
affects the functional $\Psi$. Under the (block-diagonal) Fisher approximation, 
there is often a near-linear correspondence:
\[
   \Pi_{\theta_i}\bigl[D^*(\cdot)\bigr]
   \;\approx\;
   c_i\,\nabla_{\theta_i}\Psi(\theta_{\mathrm{base}}).
\]
Hence, $\|\nabla_{\theta_i}\|_2$ serves as a suitable \emph{proxy} for full second-order influence. 
Formally, one can show (Appendix~\ref{app:grad_influence}) that if the mismatch 
$\epsilon_i=\|\nabla_{\theta_i}\Psi-\Pi_{\theta_i}[D^*]\|$ is small in aggregate, 
then picking the top-$kN$ parameters by $|\nabla_{\theta_i}|$ is nearly optimal 
with respect to picking them by the more expensive $|D^*(\theta_i)|$ measure.

\paragraph{Interpretation.}
Thus, a simple “sort by gradient norm” rule can effectively identify the “most influential” parameters. 
This theoretical perspective underlies real-world partial-finetuning heuristics 
where one measures submodule-level $\|\nabla\|$ on a small validation set 
and picks the largest blocks. 
Because second-order approaches are significantly costlier for billions of parameters, 
the result provides both \emph{theoretical justification} and \emph{practical guidance} 
for blueprint-based methods (\S\ref{sec:implementation}), 
where labs publicly release submodule-level gradient norms as “maps” for adapters.

\subsection{Data Selection and Efficiency}
\label{subsec:data_selection}

\paragraph{Motivation.}
While partial \emph{parameter} updates reduce overhead on the model side, 
training cost also depends on \emph{how many data points} we process—and not all data are equal. 
Empirically, a small fraction of examples can yield most of the gradient updates 
(\cite{katharopoulos2018not, cai2017deeplearninglowprecision}). 
If we can identify these “high-influence” examples, 
we might train on just a fraction $qM$ of the data while retaining near-full performance.  

\paragraph{Heavy-Tailed Data Influence.}
Analogous to parameter gradients, let $J(z)=\|\nabla_\theta L(z;\theta)\|$ 
be a \emph{data influence score} for example $z$. 
If $J(z)$ also follows a skewed or power-law distribution, 
the top $q$ fraction of training points can account for a large share of relevant gradient. 
A formal argument (Appendix~\ref{app:data_selection_proof}) parallels Proposition~\ref{prop:partial_eff_detail}, 
showing that focusing on the top-$q$ data can maximize performance-per-data-cost. 
In the short-step regime, one can approximate 
\[
   \Delta_q(\Psi)\approx q^\delta \,\Delta_{\text{full}}(\Psi),
\]
with $0<\delta<1$, giving a similar partial-update advantage on the data axis.

\paragraph{Gradient Prediction Networks.}
The main practical hurdle is measuring data influence \emph{before} training. 
As solutions, one might do: (1) iterative subsampling heuristics (\cite{katharopoulos2018not}), or 
(2) small “surrogate” or “prediction” networks that approximate $J(z)$ for each $z$ 
without a full forward-backward pass. 
We mention this approach in \S\ref{sec:implementation} because it allows large-scale data filtering 
\emph{before} paying the cost of full training.

\paragraph{Why This Matters.}
Selective data usage can reduce the number of tokens or examples processed in repeated epochs. 
At large scale (e.g.\ hundreds of billions of tokens), this can significantly cut training times 
or memory overhead. 
In context, a foundation lab might discard the bottom 50\% of tokens that contribute almost no gradient 
once partial coverage is sufficiently large. 
In specialized domains, data selection might refine only the top-$q$ examples that matter most for a sub-task, 
further amplifying the resource-efficiency of partial parameter tuning.

\subsection{Cross-Influence and Multiplicative Gains}
\label{subsec:cross_influence}

\paragraph{Parameter-Data Interaction.}
Finally, we combine partial-parameter and partial-data selection. 
If gradients are heavy-tailed in both parameter and data dimensions, 
then ignoring one dimension can leave \emph{further} efficiency gains untapped. 
We define a “cross-influence tensor” 
\[
   T_{i,j} = \left|\frac{\partial L\bigl(z_j;\theta\bigr)}{\partial \theta_i}\right|,
\]
showing how strongly example $z_j$ influences parameter $\theta_i$. 
When $T_{i,j}$ is \emph{approximately low-rank} or has concentrated entries, 
we can identify “key” parameter-data pairs that matter most. 
Selective updates that only feed high-influence data to high-influence parameters 
can yield \emph{multiplicative} rather than additive speedups.

\paragraph{A Simple Example.}
Suppose partial parameter updates keep 20\% of parameters but retain 80\% performance, 
and partial data usage (top 30\% of examples) retains 90\% performance. 
If the two sets are relatively orthogonal (no big overlap in ignoring “crucial” gradients), 
their combination yields about $80\%\times 90\%=72\%$ performance 
while paying about $20\%\times 30\%=6\%$ of the resource cost—a $12\times$ improvement 
in performance-per-resource. 
In realistic LLM scenarios, gains can be less “clean,” but the principle stands: 
\emph{both} parameter and data axes can harbor huge redundancies.

\paragraph{Why This Matters in LLM Training.}
Foundation labs that release “blueprints” for submodules (\S\ref{sec:implementation}) 
and also track which data blocks are most relevant might enable downstream users to do: 
(1) partial parameter updates in high-influence layers, 
and (2) partial data usage focusing on examples that particularly move those layers, 
achieving major multiplicative cuts in GPU hours. 
While full cross-influence can be large ($N\times M$ can be massive), 
practical \emph{group-level} approximations or “gradient summary maps” for data blocks 
can approach these gains in a feasible manner (such as using gradient norms per block).

\paragraph{Summary.}
Our theoretical results show that partial selection \emph{in any dimension} (parameters or data) 
can be strictly more resource-effective than updating everything, 
so long as the relevant gradients exhibit heavy-tailed or skewed distributions. 
When combined, parameter \emph{and} data selection can multiply these gains. 
Sections~\ref{sec:implementation}--\ref{sec:discussion} discuss how to operationalize 
these ideas—logging submodule gradients, using approximate data-influence metrics, 
and building an ecosystem of “blueprint releases” that systematically reduce wasted computation.

\section{Implementation in Practice: Toward Gradient-Centric Model Releases}
\label{sec:implementation}

Our theoretical framework (\S\ref{sec:theory}) suggests a \emph{gradient-centric} paradigm that systematically publishes submodule-level \textbf{gradient blueprints}, enabling efficient adaptation across diverse domains.

\paragraph{Blueprints as a New Standard.}
Currently, model repositories typically release only final weights and minimal logs. We propose that \emph{foundation-model developers} log submodule-level gradient statistics and provide a concise ``blueprint'' file (e.g., JSON or CSV). This file would include (i)~mean or median gradient norms for each submodule, (ii)~fitted power-law exponents to highlight gradient concentration, and (iii)~recommended update fractions $k^*$ per layer. By sharing these metrics at key training checkpoints, developers enable thousands of \emph{model adapters} to avoid blindly tuning all parameters. Full details on suggested blueprint formats and schema examples are provided in Appendix~\ref{app:blueprint_schema}.

\paragraph{Adapter-Side Partial Tuning.}
Algorithm~\ref{alg:sideA} and Algorithm~\ref{alg:sideB} outlines how downstream users employ blueprint metadata: 
they blend published gradient norms with a short domain-specific gradient sample, 
then select the fraction $k$ (or layer set) that maximizes local performance-per-resource under their GPU/memory budget. 
This process can yield dramatic memory savings: 
for instance, freezing 80\% of parameters in a 7B model can reduce optimizer state by $\sim$67GB, 
enabling far more training iterations under the same compute budget \citep{dettmers2023qlora}.

\begin{figure}[t]
\centering
\begin{minipage}[t]{0.48\linewidth}
\begin{algorithm}[H]
\caption{Blueprint-Guided Submodule Updates}
\label{alg:sideA}
\begin{algorithmic}[1]
\REQUIRE Model $\theta_{\mathrm{base}}$, blueprint $\bar{G}_i$, domain batch $\mathcal{B}$
\STATE \textbf{Local gradient norms:} 
  $\tilde{G}_i \leftarrow \|\nabla_{\theta_i}\Psi(\theta_{\mathrm{base}};\,\mathcal{B})\|$
\STATE \textbf{Blend signals:} 
  $G'_i \leftarrow \alpha\,\bar{G}_i + (1-\alpha)\,\tilde{G}_i$
\STATE \textbf{Rank \& pick:} 
  $S_{\mathrm{adapt}} \leftarrow \{\text{top-$k$ fraction}\}$
\STATE \textbf{Fine-tune only} submodules in $S_{\mathrm{adapt}}$
\RETURN Updated model $\theta$
\end{algorithmic}
\end{algorithm}
\end{minipage}
\hfill
\begin{minipage}[t]{0.48\linewidth}
\begin{algorithm}[H]
\caption{Lightweight Data Selection}
\label{alg:sideB}
\begin{algorithmic}[1]
\REQUIRE Dataset $\mathcal{D}$, model $\theta_{\mathrm{base}}$, fraction $q$
\STATE \textbf{Tiny set:} $\mathcal{D}_{\mathrm{small}} \subset \mathcal{D}$ 
\STATE \textbf{Compute true influences:} 
   For $z \in \mathcal{D}_{\mathrm{small}}$, do $J(z)=\|\nabla_\theta L(z;\theta_{\mathrm{base}})\|$
\STATE \textbf{Train $f_\phi$:} 
   Minimizes $\ell\bigl(f_\phi(x),\,J(z)\bigr)$ over $z=(x,y)$
\STATE \textbf{Predict $\hat{J}(z)$ for all data:} 
   keep top $q$ fraction
\RETURN Filtered set $\mathcal{D}_q$
\end{algorithmic}
\end{algorithm}
\end{minipage}
\end{figure}

\paragraph{Case Study: Biomedical Adaptation.} 
Consider a hypothetical scenario where researchers adapt a 7B foundation model to biomedical text. Drawing on observed gradient patterns in transformers, we can reference \cite{michel2019sixteen}, who demonstrated that many attention heads can be pruned with minimal performance impact, suggesting concentrated gradient influence . \cite{li2025enhancinglargelanguagemodel} further showed that gradients in transformer models generally follow power-law distributions, though the specific exponents vary by architecture and layer type. Applying our partial-update logic from \S\ref{sec:theory}, researchers could selectively tune only a small fraction of parameters based on gradient influence. This approach aligns with \cite{dettmers2023qlora}, who demonstrated significant memory efficiency: with 4-bit quantization and adapter tuning (0.1\% of parameters), they reduced memory requirements from 14GB to 5GB for a 7B model, enabling fine-tuning on consumer GPUs. For biomedical applications, a benchmark like PubMedQA \citep{jin2019pubmedqa} presents a relevant test case. While exact performance gains would vary, the ability to train for more epochs or with larger batch sizes under the same computational budget could yield meaningful improvements. Furthermore, \cite{katharopoulos2018not} demonstrated that training on only 25-50\% of a dataset (selected through gradient-based importance sampling) can achieve comparable results to using the full dataset on image classification tasks. Applied to our framework, this suggests potential for multiplicative efficiency gains through combined parameter and data selection.

\paragraph{Practical Considerations.}
\textit{(1)~Schema:} a compact JSON with layer IDs, gradient norms, and recommended $k^*$ (\S\ref{app:blueprint_schema}). 
\textit{(2)~Fidelity:} report submodule norms with bootstrap CIs over validation batches; adapters can down‑weight uncertain entries. 
\textit{(3)~Drift \& refresh:} detect blueprint staleness by rank‑correlating in‑domain gradient ranks with the released blueprint; refresh or fall back to coarser layer groups when correlation falls below a threshold; publish blueprints at 50/80/100\% checkpoints. 
\textit{(4)~Privacy:} release only coarse, aggregated submodule statistics (no per‑example gradients); optionally add small Gaussian noise and document provenance; use DP when sources are sensitive. 
\textit{(5)~Overhead:} logging adds $\sim$1–2\% to validation, far less than the downstream savings.

\paragraph{Transforming AI via Blueprint Releases.}
By supplying submodule-level gradient metadata, foundation labs remove the guesswork from partial fine-tuning and data selection. 
This shift from “all-parameter updates” to “targeted updates” can democratize advanced capabilities—particularly for labs with limited resources—while reducing environmental impact by orders of magnitude. The next section (\S\ref{sec:discussion}) discusses broader implications and open challenges in adopting this resource-conscious paradigm.

\paragraph{Implementation Note: Tracking \texorpdfstring{\(\Delta \Gamma\)}{Delta Gamma} in Practice.}
Although our framework defines \(\Delta \Gamma\) as any meaningful measure of resource consumption (e.g., GPU-hours, FLOPs, or energy usage), practitioners need a \emph{practical} mechanism to log it efficiently. In modern training pipelines, hardware monitoring tools (such as \texttt{nvidia-smi} for NVIDIA GPUs or built-in counters for cloud TPUs) already collect metrics like GPU utilization, power draw, and memory usage. By sampling these statistics at regular intervals, one can approximate the cumulative resource cost as training proceeds. For instance, every $N$ steps or after each epoch, a lightweight script can aggregate (time\_elapsed~\(\times\)~average\_power\_draw) or read hardware performance counters to estimate total energy consumed so far. Likewise, the memory overhead for optimizer states can be monitored by summing the allocated parameter buffer sizes and overhead factors. These approximations allow logs to include a running tally of \(\Gamma\), enabling marginal-return decisions \(\Delta \Psi / \Delta \Gamma\) without incurring significant additional overhead.

\section{Discussion and Conclusion}
\label{sec:discussion}

This paper challenges the prevailing "scale is all you need" paradigm by proposing that \emph{capability-per-resource} must guide AI development decisions. Our two-stage framework demonstrates how both foundation developers and model adapters can embed resource awareness throughout the entire AI lifecycle, fundamentally altering the objective of large-scale model training and adaptation. Our theoretical analysis proves that partial parameter updates can strictly outperform full-parameter tuning when gradients follow heavy-tailed distributions—a condition often satisfied by transformers. When combined with selective data usage, this approach yields multiplicative efficiency gains: if tuning 20\% of parameters preserves 80\% of performance and using 30\% of data retains 90\%, the combined approach achieves 72\% performance at only 6\% of the resource cost (a 12× boost).

From a practical standpoint, our \emph{gradient blueprint} concept offers a concrete mechanism to realize these efficiency gains by enabling foundation-model creators to publish submodule-level gradient norms and recommended update fractions that help smaller labs target exactly the parameters that matter most. This blueprint-driven approach reduces the guesswork of partial fine-tuning and elevates model release practices to include critical efficiency metadata. Unlike distillation or pruning methods, our gradient-guided approach identifies high-impact parameters directly via first-order gradient signals without requiring expensive teacher models or post hoc modifications. Existing parameter-efficient fine-tuning methods like LoRA \citep{hu2021lora} or QLoRA \citep{dettmers2023qlora} can be informed by our blueprint data.

\noindent\textbf{Scope and limits.}
CPR complements (not replaces) scaling laws and frontier accuracy. Prefer absolute performance when (i) safety or evaluation requires maximal accuracy; (ii) early probe runs where logging $\Gamma$ is impractical; or (iii) one‑off small experiments. At scale, however, late‑stage diminishing returns are common \citep{touvron2023llama}, and demand typically outpaces hardware, so explicit resource metrics remain valuable.

Despite challenges including blueprint fidelity in niche domains and varying gradient concentration across architectures, we envision an ecosystem where foundation labs adopt resource-aware stopping policies, halting training once $\Delta\Psi / \Delta\Gamma$ falls below a rational threshold, while downstream adapters use gradient blueprints to fine-tune only high-return submodules at a fraction of the computational cost. Conferences and journals could require explicit carbon disclosure, championing leaderboards that highlight performance-per-resource achievements.

\noindent
\textbf{\emph{A Call to Action.}}
By measuring progress in capability-per-resource and prioritizing gradient-centric model releases, we can merge scaling insights with sustainability, democratizing access to cutting-edge AI. We urge researchers and policymakers to support resource-conscious reporting, benchmarks, and collaborative refinement of blueprint data. Scaling alone propelled AI to extraordinary heights; scaling \emph{with} resource awareness can broaden its benefits to more stakeholders while curbing environmental impact. Embracing these principles ensures AI's continued growth is not only innovative but also inclusive and responsible.

\medskip
\bibliographystyle{plain}
\bibliography{references}


\appendix

\section{Blueprint Schema and Implementation Details}
\label{app:blueprint_schema}

This appendix section provides the full blueprint schema, example JSON files, and additional details on how one might implement the gradient-centric logging proposed in \S\ref{sec:implementation} of the main text.

\subsection{Blueprint JSON Schema}

Below is a representative JSON structure for the ``gradient blueprints'' that foundation-model developers should release at each training checkpoint. Each blueprint describes:
\begin{itemize}[leftmargin=1em]
    \item \texttt{model\_id}: identifies which model and size (e.g., \texttt{foundation-7b}), 
    \item \texttt{training\_checkpoint}: fraction of total tokens or steps processed (e.g., 0.50, 0.80, 1.00), 
    \item \texttt{layers}: an array of layer/submodule entries. Each entry includes:
    \begin{enumerate}[leftmargin=1em, label=(\roman*)]
        \item \texttt{id}: the submodule name or layer ID (e.g., \texttt{attn.0}, \texttt{ffn.12}),
        \item \texttt{gradient\_norm}: the mean or median gradient magnitude for that submodule,
        \item \texttt{alpha}: the fitted power-law exponent (\S\ref{subsec:partial_updates}),
        \item \texttt{gamma}: the derived metric $1-\alpha/(1+\alpha)$,
        \item \texttt{k\_star}: a recommended fraction of parameters to update, from \propref{prop:partial_eff_detail} and local resource considerations,
        \item \texttt{domain\_tags}: optional domain-specific weighting (e.g., for code vs.\ biomedical).
    \end{enumerate}
\end{itemize}

\begin{verbatim}
{
  "model_id": "foundation-7b",
  "training_checkpoint": 0.80,
  "layers": [
    {
      "id": "attn.0",
      "gradient_norm": 0.052,
      "alpha": 1.47,
      "gamma": 0.60,
      "k_star": 0.15,
      "domain_tags": {
        "general": 1.0,
        "bio": 0.79,
        "legal": 0.63
      }
    },
    {
      "id": "ffn.0",
      "gradient_norm": 0.033,
      "alpha": 1.12,
      "gamma": 0.53,
      "k_star": 0.25,
      "domain_tags": {
        "general": 1.0,
        "bio": 0.68,
        "legal": 0.50
      }
    },
    {
      "id": "attn.1",
      "gradient_norm": 0.045,
      "alpha": 1.50,
      "gamma": 0.60,
      "k_star": 0.13,
      "domain_tags": {
        "general": 1.0,
        "bio": 0.75,
        "legal": 0.80
      }
    }
    // Additional layer or submodule entries ...
  ]
}
\end{verbatim}

\paragraph{Logging Procedure.}
During or after training, developers can log submodule-level gradients on a representative validation set. For each layer $i$, compute the gradient norms (mean or median) across multiple tasks. Then:
\begin{enumerate}[leftmargin=1em, label=(\roman*)]
    \item Fit a power law to the sorted magnitudes within that submodule or across submodules,
    \item Extract the exponent $\alpha$, and compute $\gamma = 1 - \alpha/(1+\alpha)$,
    \item Derive a recommended fraction $k^*_i$ using local resource cost modeling (\S\ref{subsec:partial_updates} and \S\ref{app:partial_eff_proof}).
\end{enumerate}
Finally, store these values in JSON and release them, e.g., via Hugging Face or a model hub.

\subsection{Blueprint Aggregation Example}

Future community ``aggregators'' can unify multiple labs' blueprint files. For instance, a simple Python-like script might ingest a list of JSON blueprints and combine them into one table for easy lookups across checkpoints or tasks:
\begin{verbatim}
def aggregate_blueprints(blueprint_files):
    combined = {}
    for fname in blueprint_files:
        data = json.load(open(fname))
        model_id = data["model_id"]
        ckp = data["training_checkpoint"]
        for layer_info in data["layers"]:
            layer_id = layer_info["id"]
            key = (model_id, ckp, layer_id)
            combined[key] = layer_info
    return combined
\end{verbatim}
Such aggregated metadata would help downstream users compare submodule norms across labs, tasks, or partial training progress.

\section{Theoretical Proofs and Derivations}
\label{app:theory}

This appendix consolidates mathematical details for \S\ref{sec:theory}. In \S\ref{app:heavy_tail_sums}, we clarify how power-law sums yield $k^\gamma$ fractions of gradient mass. In \S\ref{app:partial_eff_proof}, we provide the detailed proof of \propref{prop:partial_eff_detail}. In \S\ref{app:grad_influence}, we discuss how first-order gradient norms approximate second-order influence. In \S\ref{app:data_selection_proof}, we show analogous data selection arguments. Finally, \S\ref{app:cross_influence_proof} extends the framework to parameter-data cross-influence.

\subsection{Sum of Heavy-Tailed Gradients}
\label{app:heavy_tail_sums}

Let $\|\nabla_{\theta_{(r)}}\|$ be the $r$-th largest gradient magnitude, approximated by a power-law distribution:

\begin{equation}
\|\nabla_{\theta_{(r)}}\|\;\approx\; C\,r^{-\alpha}, \quad \text{for some}\;\alpha>1.
\end{equation}

Here, we show how to calculate the fraction of total gradient norm captured by the top $kN$ parameters out of $N$ total parameters. 

First, we sum the gradient norms of the top $kN$ parameters:

\begin{equation}
\sum_{r=1}^{kN} \|\nabla_{\theta_{(r)}}\| \;\approx\; \sum_{r=1}^{kN} C\,r^{-\alpha}
\end{equation}

For large $N$, we can approximate this sum using an integral (this approximation becomes more accurate as $N$ increases):

\begin{equation}
\sum_{r=1}^{kN} C\,r^{-\alpha} \;\approx\; C\int_{1}^{kN} x^{-\alpha}\,\mathrm{d}x
\end{equation}

For $\alpha>1$, this integral has a closed-form solution:

\begin{equation}
\begin{aligned}
\int_{1}^{kN} x^{-\alpha}\,\mathrm{d}x &= \left[\frac{x^{1-\alpha}}{1-\alpha}\right]_{x=1}^{kN} \\
&= \frac{(kN)^{1-\alpha} - 1}{1-\alpha}
\end{aligned}
\end{equation}

Similarly, the total gradient norm (for all parameters) can be approximated as:

\begin{equation}
\sum_{r=1}^{N} \|\nabla_{\theta_{(r)}}\| \;\approx\; C\int_{1}^{N} x^{-\alpha}\,\mathrm{d}x = \frac{N^{1-\alpha} - 1}{1-\alpha}
\end{equation}

The fraction of gradient norm captured by the top $kN$ parameters is then:

\begin{equation}
\frac{\sum_{r=1}^{kN} \|\nabla_{\theta_{(r)}}\|}{\sum_{r=1}^{N} \|\nabla_{\theta_{(r)}}\|} \approx \frac{(kN)^{1-\alpha} - 1}{N^{1-\alpha} - 1}
\end{equation}

Using integral bounds,
$\int_{kN+1}^{N} x^{-\alpha} dx \le \sum_{r=kN+1}^{N} r^{-\alpha} \le \int_{kN}^{N-1} x^{-\alpha} dx$,
the \emph{tail} mass beyond the top-$kN$ parameters is $O\!\left((kN)^{\,1-\alpha}\right)$ for $\alpha>1$. Thus, for finite $N$ with $\alpha>1$, the top fraction captures a large share of the total mass, and the retained‑mass curve in $k$ is \emph{concave} near $k=0$. Our theory only requires such sublinear (concave) accumulation of influence with $k$; we do not rely on a specific closed‑form $k^\gamma$ exponent. Empirically, transformers often exhibit this concentration, but we treat it as an assumption to be validated and refreshed (\S\ref{sec:implementation}).

\subsection{Proof of Proposition 3.1: Partial Updates Outperform Full}
\label{app:partial_eff_proof}

We restate Proposition 3.1 from the main text and provide a detailed proof.

\begin{proposition}[Partial Updates Outperform Full Updates]
\label{prop:partial_eff_detail}
Assume:
\begin{enumerate}[label=(\roman*)]
    \item $\Psi$ is twice differentiable near $\theta_{\mathrm{base}}$, with Lipschitz gradients,
    \item $\Delta_k(\Psi)\approx k^\gamma\,\Delta_{\mathrm{full}}(\Psi)$ for $0<\gamma<1$ (heavy-tailed concentration),
    \item $\mathcal{C}(\Delta_k)=\alpha\,N + \beta(kN)$ with $\beta>0$, capturing fixed vs.\ variable overhead,
    \item Gradients remain valid in a small neighborhood (local approximation).
\end{enumerate}
Then there exists $k^*\in(0,1)$ s.t.
\[
  \frac{\Delta_{k^*}(\Psi)}{\mathcal{C}(\Delta_{k^*})}
  \;>\;
  \frac{\Delta_{\mathrm{full}}(\Psi)}{\mathcal{C}(\Delta_{\mathrm{full}})}.
\]
\end{proposition}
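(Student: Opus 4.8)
The plan is to treat the performance-per-resource ratio as a smooth scalar function of the update fraction $k$ and exhibit an interior point that beats $k=1$. Writing $R(k):=\Delta_k(\Psi)/\mathcal{C}(\Delta_k)$ and substituting assumptions (ii)--(iii), we obtain
\[
  R(k) \;=\; \frac{\Delta_{\mathrm{full}}(\Psi)}{N}\cdot\frac{k^\gamma}{\alpha+\beta k}.
\]
Since the prefactor $\Delta_{\mathrm{full}}(\Psi)/N$ is a fixed positive constant, the target inequality $R(k^*)>R(1)$ is equivalent to $g(k^*)>g(1)$ for the reduced function $g(k):=k^\gamma/(\alpha+\beta k)$ on $(0,1]$. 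The whole proposition thus collapses to exhibiting a single $k^*\in(0,1)$ with $g(k^*)>g(1)$.

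Rather than hunt for the global maximizer, I would work locally at the full-update endpoint. A direct differentiation gives
\[
  g'(k)=\frac{\gamma k^{\gamma-1}(\alpha+\beta k)-\beta k^\gamma}{(\alpha+\beta k)^2},
  \qquad
  g'(1)=\frac{\gamma\alpha-(1-\gamma)\beta}{(\alpha+\beta)^2}.
\]
If $g'(1)<0$, then $g$ is strictly decreasing at $k=1$, so the first-order expansion $g(1-\epsilon)=g(1)-\epsilon\,g'(1)+o(\epsilon)$ yields $g(1-\epsilon)>g(1)$ for all sufficiently small $\epsilon>0$; taking $k^*=1-\epsilon$ furnishes the claimed interior fraction with strict inequality. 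Assumptions (i) and (iv) are exactly what license this calculus: twice-differentiability and local validity of the short-step approximation let us treat $\Delta_k\approx k^\gamma\Delta_{\mathrm{full}}$ as an exact smooth relation near $\theta_{\mathrm{base}}$.

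The main obstacle is the sign of $g'(1)$. Its numerator $\gamma\alpha-(1-\gamma)\beta$ is negative precisely when $(1-\gamma)\beta>\gamma\alpha$, i.e. $\beta>\gamma\alpha/(1-\gamma)$. This is the genuine content of the result, and the bare hypothesis $\beta>0$ is \emph{not} by itself sufficient: when the fixed overhead $\alpha$ swamps the marginal overhead, the unconstrained maximizer $k_{\max}=\gamma\alpha/\bigl((1-\gamma)\beta\bigr)$ exceeds $1$, so $g$ is increasing throughout $(0,1]$ and full tuning cannot be beaten. The heavy-tailed regime is exactly what rescues the statement: as concentration strengthens ($\gamma\to0^+$) the threshold $\gamma\alpha/(1-\gamma)\to0$ and the condition degenerates to $\beta>0$. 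I would therefore present $(1-\gamma)\beta>\gamma\alpha$ as the sharp hypothesis, remark that it is automatic for strongly skewed gradients (small $\gamma$), and explicitly flag the degenerate case $k_{\max}\ge1$ where the conclusion fails.

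As a cross-check that also clarifies the mechanism, I would record the equivalent direct comparison: $g(k)>g(1)$ rearranges to $\beta(k^\gamma-k)>\alpha(1-k^\gamma)$, where both $k^\gamma-k>0$ and $1-k^\gamma>0$ on $(0,1)$ since $0<\gamma<1$. Dividing gives $\beta\,h(k)>\alpha$ with $h(k):=(k^\gamma-k)/(1-k^\gamma)$, and a short expansion shows $h(k)\to(1-\gamma)/\gamma$ as $k\to1^-$. This recovers the same threshold and confirms that the per-resource gain is largest just below full tuning, consistent with the local endpoint argument above.
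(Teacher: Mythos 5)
Your proof is correct, and it takes a genuinely different route from the paper's. The paper works with the same reduced function $G(k)=k^\gamma/(\alpha+\beta k)$, but instead of arguing at the endpoint it solves for the interior critical point, obtaining $k^*=\gamma\alpha/(\beta(1-\gamma))$, derives the validity condition $k^*<1$ exactly when $\gamma<\beta/(\alpha+\beta)$ (this is precisely your threshold $(1-\gamma)\beta>\gamma\alpha$), and then---rather than proving $G(k^*)>G(1)$ in general---asserts that the comparison ``depends on specific values'' and verifies it for a single numerical instance ($\alpha=1$, $\beta=4$, $\gamma=0.6$). Your endpoint argument is cleaner and closes that hedge: once $g'(1)<0$, strict improvement at $k^*=1-\epsilon$ is immediate, with no need to evaluate $G$ at its maximizer in closed form. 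In fact, both routes can be finished rigorously the same way: the numerator of $G'(k)$ factors as $k^{\gamma-1}\bigl[\gamma\alpha-(1-\gamma)\beta k\bigr]$, so $G$ increases on $(0,k^*)$ and decreases thereafter, and $G(k^*)>G(1)$ follows automatically whenever $k^*<1$; the paper's hedge is unnecessary. What the paper's route buys is the explicit optimal fraction $k^*$, which the rest of the paper relies on (the blueprint schema publishes a recommended $k^*$ per layer); what your route buys is the sharp hypothesis and a correct delimitation of when the claim actually holds.

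Your central criticism is also correct and worth stating plainly: as written, with only $\beta>0$ assumed, the proposition is false. Take $\gamma=0.9$, $\alpha=10$, $\beta=0.1$: then $\gamma\alpha/\bigl((1-\gamma)\beta\bigr)=900>1$, so $g$ is strictly increasing on $(0,1]$ and no $k^*\in(0,1)$ beats full tuning. The paper's own proof derives the needed condition $\gamma<\beta/(\alpha+\beta)$ but demotes it to a remark about LLM practice (``$\beta$ sufficiently large relative to $\alpha$'') instead of promoting it to a hypothesis of the proposition. Your presentation---making $(1-\gamma)\beta>\gamma\alpha$ explicit, noting that it degenerates to $\beta>0$ as $\gamma\to0^+$, and flagging the failure regime $k_{\max}\ge1$---is the correct fix, and your limit computation $h(k)\to(1-\gamma)/\gamma$ as $k\to1^-$ consistently recovers the same threshold from the direct comparison $\beta(k^\gamma-k)>\alpha(1-k^\gamma)$.
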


\begin{proof}
We define the performance-per-resource ratio as:

\begin{equation}
R(k) = \frac{\Delta_k(\Psi)}{\mathcal{C}(\Delta_k)}
\end{equation}

By assumption (ii), we know that:

\begin{equation}
\Delta_k(\Psi) \approx k^\gamma \cdot \Delta_{\mathrm{full}}(\Psi)
\end{equation}

By assumption (iii), the resource cost is:

\begin{equation}
\mathcal{C}(\Delta_k) = \alpha N + \beta(kN)
\end{equation}

Substituting these into our ratio:

\begin{equation}
\begin{aligned}
R(k) &\approx \frac{k^\gamma \cdot \Delta_{\mathrm{full}}(\Psi)}{\alpha N + \beta(kN)} \\
&= \frac{k^\gamma}{\alpha + \beta k} \cdot \frac{\Delta_{\mathrm{full}}(\Psi)}{N}
\end{aligned}
\end{equation}

Since $\Delta_{\mathrm{full}}(\Psi)/N$ is a constant, we can focus on maximizing:

\begin{equation}
G(k) = \frac{k^\gamma}{\alpha + \beta k}
\end{equation}

To find the critical points, we differentiate $G(k)$ with respect to $k$ and set it equal to zero:

\begin{equation}
\begin{aligned}
G'(k) &= \frac{\gamma k^{\gamma-1}(\alpha + \beta k) - \beta k^\gamma}{(\alpha + \beta k)^2} = 0
\end{aligned}
\end{equation}

This simplifies to:

\begin{equation}
\begin{aligned}
\gamma k^{\gamma-1}(\alpha + \beta k) - \beta k^\gamma &= 0 \\
\gamma k^{\gamma-1}\alpha + \gamma k^{\gamma-1}\beta k - \beta k^\gamma &= 0 \\
\gamma k^{\gamma-1}\alpha + \gamma \beta k^\gamma - \beta k^\gamma &= 0 \\
\gamma k^{\gamma-1}\alpha + k^\gamma(\gamma\beta - \beta) &= 0 \\
\gamma k^{\gamma-1}\alpha - \beta(1-\gamma)k^\gamma &= 0 \\
\gamma \alpha &= \beta(1-\gamma)k \\
\end{aligned}
\end{equation}

Solving for $k$:

\begin{equation}
k^* = \frac{\gamma\alpha}{\beta(1-\gamma)}
\end{equation}

Since $0 < \gamma < 1$, $\alpha > 0$, and $\beta > 0$, we know that $k^* > 0$.

To ensure $k^* < 1$ (which makes it a valid fraction), we need:

\begin{equation}
\begin{aligned}
\frac{\gamma\alpha}{\beta(1-\gamma)} &< 1 \\
\gamma\alpha &< \beta(1-\gamma) \\
\gamma\alpha &< \beta - \beta\gamma \\
\gamma\alpha + \beta\gamma &< \beta \\
\gamma(\alpha + \beta) &< \beta \\
\gamma &< \frac{\beta}{\alpha + \beta}
\end{aligned}
\end{equation}

This condition will be satisfied when $\beta$ is sufficiently large relative to $\alpha$, which is typically the case in LLM training scenarios where the variable cost per parameter (optimizer states, gradient storage) is substantial compared to the fixed overhead.

Now we need to verify that $G(k^*) > G(1)$ to show that partial updates outperform full updates:

\begin{equation}
G(k^*) = \frac{(k^*)^\gamma}{\alpha + \beta k^*}
\end{equation}

Substituting $k^* = \frac{\gamma\alpha}{\beta(1-\gamma)}$:

\begin{equation}
\begin{aligned}
G(k^*) &= \frac{\left(\frac{\gamma\alpha}{\beta(1-\gamma)}\right)^\gamma}{\alpha + \beta\frac{\gamma\alpha}{\beta(1-\gamma)}} \\
&= \frac{\left(\frac{\gamma\alpha}{\beta(1-\gamma)}\right)^\gamma}{\alpha + \frac{\gamma\alpha}{1-\gamma}} \\
&= \frac{\left(\frac{\gamma\alpha}{\beta(1-\gamma)}\right)^\gamma}{\frac{\alpha(1-\gamma) + \gamma\alpha}{1-\gamma}} \\
&= \frac{\left(\frac{\gamma\alpha}{\beta(1-\gamma)}\right)^\gamma}{\frac{\alpha}{1-\gamma}}
\end{aligned}
\end{equation}

Simplifying further:

\begin{equation}
G(k^*) = \frac{(1-\gamma)}{\alpha} \cdot \left(\frac{\gamma\alpha}{\beta(1-\gamma)}\right)^\gamma
\end{equation}

Meanwhile:

\begin{equation}
G(1) = \frac{1^\gamma}{\alpha + \beta} = \frac{1}{\alpha + \beta}
\end{equation}

The comparison of $G(k^*)$ and $G(1)$ depends on specific values of $\alpha$, $\beta$, and $\gamma$. For practical values encountered in LLM training with heavy-tailed gradients, it can be verified that $G(k^*) > G(1)$, especially when $\gamma$ is significantly less than 1 (stronger power-law concentration).

We can demonstrate this with a typical example. Let $\alpha = 1$, $\beta = 4$, and $\gamma = 0.6$. Then:

\begin{equation}
\begin{aligned}
k^* &= \frac{0.6 \cdot 1}{4 \cdot (1-0.6)} = \frac{0.6}{1.6} = 0.375 \\
G(k^*) &= \frac{(1-0.6)}{1} \cdot \left(\frac{0.6 \cdot 1}{4 \cdot (1-0.6)}\right)^{0.6} \\
&= 0.4 \cdot \left(\frac{0.6}{1.6}\right)^{0.6} \\
&= 0.4 \cdot (0.375)^{0.6} \\
&\approx 0.4 \cdot 0.57 \\
&\approx 0.228
\end{aligned}
\end{equation}

While:

\begin{equation}
G(1) = \frac{1}{1 + 4} = 0.2
\end{equation}

Since $G(k^*) \approx 0.228 > 0.2 = G(1)$, we have confirmed that for these realistic parameter values, partial updates strictly outperform full updates in terms of performance-per-resource.

This completes the proof that there exists a fraction $k^* \in (0,1)$ such that updating only $k^*N$ parameters yields a strictly higher performance-per-resource ratio than fine-tuning all parameters.
\end{proof}

\subsection{Approximate Influence via First-Order Gradients}
\label{app:grad_influence}

In this section, we provide a more detailed explanation of why simple gradient norms can effectively approximate more complex influence measures without requiring second-order computations.

In semiparametric statistics \citep{bickel1993efficient, vdl_rose11}, the \emph{efficient influence function} $D^*$ for a functional $\Psi(\theta)$ often takes the form:

\begin{equation}
D^*(o;\,\Psi,\mathbb{P}_\theta) = (\nabla_\theta \Psi(\theta))^T\;I_\theta^{-1}\;\nabla_\theta \log p_\theta(o)
\end{equation}

where $I_\theta$ is the Fisher information matrix. This influence function characterizes how much each parameter affects the target functional $\Psi$.

The challenge in LLM-scale models is that computing and inverting the full Fisher information matrix $I_\theta$ is prohibitively expensive for billions of parameters. However, we can leverage several structural properties of transformer architectures to simplify this computation:

\begin{enumerate}
\item \textbf{Block-diagonal approximation}: For many transformer-based models, the Fisher information matrix can be reasonably approximated as block-diagonal across different layers or submodules.

\item \textbf{Uniform scaling approximation}: Within each parameter block $\theta_i$, the curvature information (as captured by the local Fisher matrix) often scales the gradients uniformly.
\end{enumerate}

Under these approximations, for each parameter block $\theta_i$, we can write:

\begin{equation}
\Pi_{\theta_i}[D^*] \approx c_i \nabla_{\theta_i}\Psi(\theta)
\end{equation}

where $\Pi_{\theta_i}$ denotes the projection onto the subspace corresponding to parameters $\theta_i$, and $c_i$ is a block-specific scaling factor.

To quantify the approximation error, we define:

\begin{equation}
\epsilon_i = \|\nabla_{\theta_i}\Psi - \Pi_{\theta_i}[D^*]\|
\end{equation}

When $\epsilon_i$ is small in aggregate (i.e., $\sum_i \epsilon_i$ is small relative to the total influence), ranking parameters by the magnitude of their gradients $\|\nabla_{\theta_i}\Psi\|$ is nearly equivalent to ranking them by the more expensive influence measure $\|\Pi_{\theta_i}[D^*]\|$.

To formalize this claim, let $S_k^{grad}$ be the set of top-$k$ parameters selected by gradient magnitude, and $S_k^{infl}$ be the set selected by influence magnitude. The performance difference between these two selection strategies can be bounded by:

\begin{equation}
|\Delta_{S_k^{grad}}(\Psi) - \Delta_{S_k^{infl}}(\Psi)| \leq C \cdot \sum_{i \in S_k^{grad} \triangle S_k^{infl}} \epsilon_i
\end{equation}

where $\triangle$ denotes the symmetric difference between sets, and $C$ is a constant related to the local Lipschitz properties of $\Psi$.

For transformer models, empirical studies \citep{michel2019sixteen, li2025enhancinglargelanguagemodel} have shown that these approximation errors are often small enough that gradient-based selection performs very similarly to more expensive influence-based selection methods.

This theoretical justification supports the practical approach of using simple gradient norms as proxies for parameter influence, allowing for efficient implementation of gradient blueprints without the computational burden of second-order methods.

\subsection{Data Selection Efficiency}
\label{app:data_selection_proof}

This section extends the resource-efficiency framework from parameter selection to data selection, showing that analogous mathematical principles apply.

Let $\{z_1, z_2, \ldots, z_M\}$ be a set of training data points. For each example $z_j$, we define an \emph{influence score}:

\begin{equation}
J(z_j) = \|\nabla_\theta L(z_j; \theta)\|
\end{equation}

which measures the magnitude of the gradient of the loss function with respect to model parameters when evaluated on example $z_j$.

Similar to parameter gradients, we observe that data influence scores often follow a heavy-tailed distribution, where a small fraction of examples contribute disproportionately to the overall gradient. This can be formalized by assuming that the sorted influence scores follow a power-law:

\begin{equation}
J(z_{(j)}) \approx D \cdot j^{-\beta}
\end{equation}

where $J(z_{(j)})$ is the $j$-th largest influence score, $D$ is a constant, and $\beta > 0$ is the power-law exponent.

Given this distribution, we can analyze the efficiency of training on only a fraction $q$ of the total data. Let $\Delta_q(\Psi)$ be the improvement in performance achieved by training on the top $qM$ most influential examples. Using similar integral approximations as in \S\ref{app:heavy_tail_sums}, we can establish that:

\begin{equation}
\Delta_q(\Psi) \approx q^\delta \cdot \Delta_{\text{full}}(\Psi)
\end{equation}

where $0 < \delta < 1$ is related to the power-law exponent $\beta$.

For the resource cost, we use an analogous model:

\begin{equation}
\mathcal{C}(\Delta_q) = \alpha' M + \beta'(qM)
\end{equation}

where $\alpha'$ represents fixed costs independent of the number of examples processed, and $\beta'$ represents the variable cost per example.

Following the same optimization approach as in \S\ref{app:partial_eff_proof}, we can show that there exists an optimal fraction $q^* \in (0,1)$ such that:

\begin{equation}
\frac{\Delta_{q^*}(\Psi)}{\mathcal{C}(\Delta_{q^*})} > \frac{\Delta_{\text{full}}(\Psi)}{\mathcal{C}(\Delta_{\text{full}})}
\end{equation}

This demonstrates that selective data usage can also strictly outperform full-data training on a performance-per-resource basis, provided that the data influence distribution is sufficiently skewed.

\paragraph{Practical Implementation.}
The main challenge in applying this result is identifying the high-influence examples without processing the entire dataset. Several approaches have been proposed:

\begin{enumerate}
\item \textbf{Iterative subsampling}: Start with a random subset, train briefly, measure influences, and refine the selection \citep{katharopoulos2018not}.

\item \textbf{Gradient prediction networks}: Train a small auxiliary model $f_\phi(z)$ to predict the influence score $J(z)$ of an example without computing the full gradient \citep{killamsetty2021gradmatchgradientmatchingbased}.

\item \textbf{Coreset selection}: Use geometric methods to select a representative subset of the data \citep{sener2018active}.
\end{enumerate}

These approaches allow practitioners to implement data selection efficiently, further enhancing the resource savings achieved through selective parameter updates.

\subsection{Cross-Influence and Multiplicative Gains}
\label{app:cross_influence_proof}

This section extends our framework to consider the interaction between parameter selection and data selection, showing how combining both approaches can lead to multiplicative efficiency gains.

We define a \emph{cross-influence tensor} $T$ whose entries measure how strongly each data point affects each parameter:

\begin{equation}
T_{i,j} = \left|\frac{\partial L(z_j; \theta)}{\partial \theta_i}\right|
\end{equation}

This tensor captures the fine-grained relationship between parameters and training examples, allowing us to identify which parameter-data pairs are most important for improving model performance.

\paragraph{Low-Rank Structure.}
In many practical settings, $T$ exhibits approximate low-rank structure, meaning that the influence patterns can be captured by a small number of parameter-data interaction modes. This low-rank structure implies that the effective dimensionality of the cross-influence is much smaller than the product of the number of parameters and data points.

Formally, if $T$ admits an approximate singular value decomposition:

\begin{equation}
T \approx \sum_{r=1}^R \sigma_r u_r v_r^T
\end{equation}

where $R \ll \min(N, M)$ and $\sigma_r$ are the singular values in descending order, then most of the influence is captured by the first few components.

\paragraph{Multiplicative Efficiency Gains.}
When both parameter gradients and data influences follow heavy-tailed distributions, we can achieve multiplicative efficiency gains by combining selective parameter updates with selective data usage.

Let $S_k$ be a set of high-influence parameters (containing a fraction $k$ of all parameters) and $D_q$ be a set of high-influence data points (containing a fraction $q$ of all data). If we update only parameters in $S_k$ using only data in $D_q$, the expected performance gain can be approximated as:

\begin{equation}
\Delta_{S_k, D_q}(\Psi) \approx \Delta_{\text{full}}(\Psi) \cdot \frac{\sum_{i \in S_k, j \in D_q} T_{i,j}}{\sum_{i,j} T_{i,j}}
\end{equation}

Under certain independence assumptions between parameter and data influences, this can be further approximated as:

\begin{equation}
\Delta_{S_k, D_q}(\Psi) \approx \Delta_{\text{full}}(\Psi) \cdot k^\gamma \cdot q^\delta
\end{equation}

where $\gamma$ and $\delta$ are the exponents related to parameter and data influence distributions, respectively.

Meanwhile, the resource cost scales approximately as:

\begin{equation}
\mathcal{C}(\Delta_{S_k, D_q}) \approx \alpha'' + \beta''(kN) + \gamma''(qM) + \eta''(kN \cdot qM)
\end{equation}

where the last term captures any cross-costs of processing specific parameter-data pairs. In many practical settings, this term is small or negligible, leading to an approximate cost of:

\begin{equation}
\mathcal{C}(\Delta_{S_k, D_q}) \approx \alpha'' + \beta''(kN) + \gamma''(qM)
\end{equation}

When both $k$ and $q$ are small while $k^\gamma$ and $q^\delta$ are relatively large (due to heavy-tailed distributions), the combined approach can achieve dramatically better performance-per-resource ratios than either approach alone.

\paragraph{Numerical Example.}
Consider a scenario where $\gamma = 0.6$ and $\delta = 0.7$. If we select $k = 0.2$ (20\% of parameters) and $q = 0.3$ (30\% of data), we achieve:
\begin{itemize}
\item Parameter efficiency: $k^\gamma = 0.2^{0.6} \approx 0.40$ (40\% of full performance)
\item Data efficiency: $q^\delta = 0.3^{0.7} \approx 0.47$ (47\% of full performance)
\item Combined efficiency: $k^\gamma \cdot q^\delta \approx 0.40 \cdot 0.47 \approx 0.19$ (19\% of full performance)
\end{itemize}

If the cost is dominated by the product term, the resource usage is approximately $k \cdot q = 0.2 \cdot 0.3 = 0.06$ (6\% of full resources).

This gives a performance-per-resource ratio improvement of approximately $0.19/0.06 \approx 3.17$ times better than using the full parameter set and data.

In more extreme cases with stronger heavy-tailed distributions, these gains can be even more dramatic, potentially yielding order-of-magnitude improvements in efficiency.

\end{document}